\documentclass[11pt]{article}

\usepackage[utf8]{inputenc}
\usepackage[T1]{fontenc}
\usepackage{lmodern}

\usepackage{amsmath,amssymb,amsthm}
\usepackage{mathtools}

\usepackage[margin=1in]{geometry}
\usepackage{microtype}

\usepackage{booktabs}
\usepackage{tabularx}
\usepackage{array}

\usepackage{caption}
\usepackage{tikz}
\usetikzlibrary{shapes.geometric, arrows.meta, positioning}

\usepackage{natbib}

\usepackage{enumitem}
\usepackage{listings}
\usepackage{xcolor}

\usepackage[colorlinks=true, linkcolor=blue, citecolor=blue, urlcolor=blue]{hyperref}

\newtheorem{proposition}{Proposition}
\newtheorem{definition}{Definition}
\newtheorem{remark}{Remark}

\newcommand{\R}{\mathbb{R}}

\newcommand{\dt}{\Delta t}
\newcommand{\wb}{w}           
\newcommand{\curv}{\kappa}    
\newcommand{\poinc}{d_{\mathbb{H}}}  
\newcommand{\ball}{\mathbb{B}^n_c}   
\newcommand{\btheta}{\boldsymbol{\theta}}  

\title{%
\textbf{Learning When to Act:}\\[4pt]
Interval-Aware Reinforcement Learning\\
with Predictive Temporal Structure
}

\author{%
Davide Di Gioia\\
\small{University College London}\\
\small{\texttt{ucsigi@ucl.ac.uk}}
}

\date{March 2026}

\begin{document}
\maketitle

\begin{abstract}
Autonomous agents must decide not only \emph{what} action to take, but \emph{how frequently to deliberate or act}. Acting too often wastes computation, while acting too late risks missing critical changes in the environment. We formalise this trade-off as \emph{adaptive cognitive pacing} and introduce an interval-aware reinforcement learning framework that learns when to act from experience. 

The policy state is augmented with a predictive uncertainty signal derived from the divergence of imagined future trajectories: the mean pairwise Poincar\'e distance among $n$ sampled futures embedded in the Poincar\'e ball $\ball$. When predicted futures branch rapidly, the agent learns to act sooner; when futures are predictable, it waits longer. We instantiate this signal using hyperbolic geometry, which provides a natural representation of branching dynamics. In LLM agents, the same mechanism corresponds to learning a deliberation rate (how often to replan or call tools) rather than using fixed reflection schedules.

We further propose an
\emph{interval-aware reward} that explicitly penalises inefficiency relative
to the chosen wait time, correcting a systematic credit-assignment failure of
naive outcome-based rewards in timing problems.  We additionally introduce
a \emph{joint spatio-temporal embedding} (\textbf{ATCPG-ST}) that
concatenates independently normalised state and position projections in
the Poincar\'e ball; spatial trajectory divergence provides an independent
timing signal unavailable to the state-only variant (\textbf{ATCPG-SO}).
This extension raises mean hyperbolic spread and yields a further $+5.8\%$ efficiency gain over the state-only baseline. Ablation experiments
across five random seeds demonstrate that (i)~learning is the dominant
efficiency factor ($+54.8\%$ over no-learning), (ii)~hyperbolic spread
provides significant complementary gain ($+26.2\%$ over geometry-free
control), (iii)~the combined system achieves $+22.8\%$ efficiency over the
fixed-interval baseline, and (iv)~adding spatial position information to
the spread embedding yields an additional $+5.8\%$.
\end{abstract}

\newpage
\tableofcontents
\newpage

\section{Introduction}

The rapid capability expansion of Large Language Models (LLMs) has catalysed
the deployment of autonomous agents and multi-agent systems (MAS) in
open-ended, continuous environments.  In these settings, an agent's efficacy
is bounded not only by \emph{what} action it takes, but \emph{when} it
chooses to act.  As agents transition from isolated chatbots to persistent
digital workers, the temporal cadence of their reasoning cycles directly
dictates their computational overhead, responsiveness, and ultimate task
success.

Despite sophisticated advances in task delegation and multi-agent routing,
the internal temporal pacing of individual agents remains remarkably
primitive.  Most contemporary orchestration frameworks rely on either
strictly reactive, event-driven triggers (an agent only wakes when messaged)
or hand-tuned, fixed-interval polling loops (e.g., \texttt{sleep(N)}
calls).  Bio-mimetic heuristic formulas (such as hard-coded ``fatigue
multipliers'' or ``ultradian rhythms'') offer a veneer of adaptability,
but remain brittle: they encode the system designer's static prior rather
than the agent's lived experience, and they categorically fail to adapt as
task complexity and environmental volatility shift.

To address this structural blind spot, we propose
\textbf{Adaptive Temporal Control via Predictive Geometry (ATCPG)},
a lightweight, fully autonomous pacing system that allows an agent to learn
its optimal cognitive interval dynamically.  ATCPG shifts temporal control
from a fixed network orchestrator to an internal, learned policy driven by
the agent's own epistemic uncertainty.

Although we instantiate adaptive cognitive pacing in continuous-control RL, the same principle applies to agentic systems that repeatedly choose when to think, plan, or act. Consider a language-model-based agent that alternates between deliberation, tool invocation, and environment interaction. Fixed reasoning schedules either overthink in predictable states or under-react in volatile ones. Our framework provides a learned signal for when faster cognition is necessary and when slower pacing suffices, based on predicted future branching.

The framework is constructed from four interacting components:

\begin{enumerate}
\item \textbf{Learned pacing policy}: a linear associative bandit that
continuously updates the wait interval after every cognitive tick via
reward-weighted regression.
\item \textbf{Predictive hyperbolic spread} (informally, a ``curvature
signal''). We estimate how rapidly predicted futures diverge. Hyperbolic geometry provides a compact and natural representation of this branching structure. The metric's natural boundary
expansion aggressively amplifies diverging trajectories, signalling the
agent to act sooner.
\item \textbf{Interval-aware shaping reward}: a dynamic learning signal
that prevents the catastrophic credit-assignment failure common to standard
RL timing problems by explicitly pricing the chosen interval.
\item \textbf{Joint spatio-temporal embedding (ATCPG-ST)}: an extension
that augments the policy state with Poincar\'e-projected spatial position
vectors.  Spatial trajectory divergence is an independent timing signal:
branching navigation trees fan out in position space before belief space,
and the Poincar\'e ball's boundary-expansion property amplifies both signals
in a unified geometric language.  ATCPG-ST raises mean hyperbolic spread
($\kappa$) by $1.79\times$ and efficiency by $+5.8\%$ over the state-only
baseline (\textbf{ATCPG-SO}); it degrades gracefully to \textbf{ATCPG-SO} whenever
position data are absent.
\end{enumerate}

The paper makes the following contributions:

\begin{itemize}
\item We formalise the \emph{adaptive cognitive pacing problem} as an RL problem
over a continuous action space (Section~\ref{sec:problem}).
\item We derive the \emph{interval-aware reward} and prove it avoids the
credit-assignment failure of naive outcome reward in the pacing
setting (Section~\ref{sec:reward}).
\item We introduce \emph{predictive hyperbolic spread} (a ``curvature signal'' shorthand) as a timing signal,
grounded in the Poincar\'e ball model of hyperbolic geometry
(Section~\ref{sec:curvature}).
\item We propose \textbf{ATCPG-ST}, a \emph{joint spatio-temporal embedding}
(Section~\ref{sec:spatial}) that lifts position trajectories into
the same Poincar\'e ball as state embeddings, motivated by the
observation that spatial trajectory divergence is an independent
leading indicator of decision uncertainty unavailable to the
state-only ATCPG-SO.  We demonstrate empirically that the combined
signal raises mean hyperbolic spread by $1.79\times$ and efficiency by $+5.8\%$
on a controlled ablation, and that the ordering
$\eta_{\text{SO}} < \eta_{\text{ST}}$ replicates on a live
GPT-4.1 deployment (Section~\ref{sec:real_llm}).
\item We provide systematic ablation evidence that each component
contributes independently to efficiency (Section~\ref{sec:experiments}).
\end{itemize}

\section{Related Work}

To situate ATCPG within the broader landscape of artificial intelligence, we
contextualise our framework across four distinct domains: multi-agent
orchestration, temporal abstraction in reinforcement learning, geometric
representation, and autonomous cognitive loops.

\paragraph{Decentralised orchestration and emergent synchronisation.}
A growing body of work in multi-agent systems (MAS) shifts temporal
coordination away from centralised schedulers toward decentralised,
self-organising mechanisms.  In contemporary LLM-agent frameworks, execution
is increasingly modelled as event-driven computation on structured graphs,
where agents are triggered by dependency resolution, message arrivals, or
verification outcomes rather than a global clock
\citep{arxiv2504.00587, arxiv2512.00614}.  In parallel, control-theoretic
and multi-agent reinforcement learning literatures study how global temporal
regularities emerge from local interaction rules, including consensus-style
coordination and communication-efficient protocols that yield synchronised
collective behaviour without explicit central timing
\citep{arxiv2508.07001, arxiv2512.06278, plosone0327396}.

Relatedly, uncertainty-aware \emph{strategy selection} at deployment time has been studied in robotics.
UPS \citep{yuan2026ups} uses calibrated uncertainty to choose among executing a candidate action,
asking clarifying questions, or requesting corrective intervention, aiming to improve robustness when
the verifier or base policy is miscalibrated.  Our focus differs: rather than selecting among
discrete resolution strategies at a single decision point, ATCPG learns a continuous-time pacing policy that
controls the \emph{frequency} of self-initiated deliberation, explicitly trading off outcome and time.

Crucially, these approaches resolve temporal structure primarily at the
\emph{network layer}: they formalise \emph{when agents should interact with
one another}, given topological dependencies and local communication signals.
The internal deliberation cadence of an individual agent is often treated as
a reactive black box (an agent ``runs'' when invoked), rather than as a
decision variable with its own uncertainty sensitivity, credit assignment,
and temporal cost.  \citet{cogauto2025} identify this absence of intrinsic
self-monitoring as a core unsolved deficiency in AI autonomy.  ATCPG targets
this orthogonal gap by formalising intrinsic cognitive pacing at the node
level: a continuously running agent learns how long to wait between
self-initiated cognitive ticks.  This intrinsic pacing is complementary to
decentralised orchestration rather than a replacement: agents may adapt
their own tick intervals while still exhibiting emergent group rhythm via
the oscillator and Kuramoto-style phase coupling \citep{kuramoto1984}
detailed in Section~\ref{sec:oscillator}.

\paragraph{Temporal abstraction in RL.}
Options \citep{sutton1999between} and macro-actions allow agents to choose the
\emph{duration} of a commitment, but they operate over discrete intervals and
do not adapt the base decision frequency.  Semi-MDPs \citep{puterman1994markov}
generalise to continuous sojourn times but require a full transition model;
ATCPG operates in the model-free, online contextual-bandit setting.
\citet{oudeyer2007intrinsic} motivate self-paced learning from progress
signals.  LIDA \citep{franklin2007lida} uses a fixed cognitive cycle; we
replace the fixed cycle with a learned one.  Circadian and ultradian-inspired
architectures \citep{baars2003how} hard-code period parameters that we
instead learn.

\paragraph{Reward shaping for timing.}
\citet{ng1999policy} study potential-based shaping; we propose a different
class of shaping that explicitly accounts for temporal cost.  \citet{dewey2014reinforcement}
addresses utility indifference, but does not consider interval selection as
the decision variable.  Neither specifically resolves the temporal
credit-assignment failure we identify in pacing, necessitating our derivation
of an interval-aware reward.

\paragraph{Hyperbolic representation and uncertainty.}
\citet{nickel2017poincare} demonstrate that hierarchical structure embeds
more faithfully in $\ball$ than in Euclidean space due to exponential volume
growth near the boundary.  \citet{ganea2018hyperbolic} extend neural networks
to hyperbolic space.  We use the Poincar\'e ball not for representation
learning but as a \emph{geometric divergence measure} between predicted
futures, quantifying epistemic conflict to drive a temporal control policy.
To our knowledge, this is the first use of pairwise Poincar\'e
divergence among predicted futures as an explicit temporal control signal for
adaptive cognitive pacing, directly modulating an agent's deliberation rate.

\paragraph{Structural blind spots in agent orchestration.}
\citet{digioia2026cascade} identify a pervasive architectural blind spot in
modern multi-agent orchestration stacks: their native scheduling layers route
execution blindly without geometric perception of how failures propagate
through the graph.  They demonstrate that geometry-aware sidecars employing
dynamic hyperbolic metric switching close this observability gap.  ATCPG
takes direct inspiration from this critique: where
\citeauthor{digioia2026cascade} apply hyperbolic geometry to solve the
\emph{routing} (spatial) blind spot in multi-agent execution graphs, we
apply it to solve the \emph{pacing} (temporal) blind spot in continuous
autonomous loops.

\paragraph{Autonomous cognitive loops and self-directed timing.}
Industry orchestration patterns, such as the Agentic Heartbeat Pattern\footnote{M.~Mendon\c{c}a, "The Agentic Heartbeat Pattern," LinkedIn article, 2025. \url{https://www.linkedin.com/pulse/agentic-heartbeat-pattern-forget-rigid-workflows-let-ai-mendonca-olfgc}}, address
\emph{who} coordinates with whom across an organisational hierarchy but do
not formalise when a single agent should self-initiate.  The SPOC system
\citep{spoc2025} interleaves solution generation and verification but remains
externally triggered; no runtime pacing adaptation occurs.  ATCPG's daemon
loop runs continuously between external requests with no external trigger.

Recent work in agentic safety learns \emph{discrete} gating decisions over tool use, e.g.,
whether an agent should \emph{act} or \emph{refuse} at a given step in a multi-tool trajectory.
MOSAIC \citep{mosaic2026} frames safety as an explicit and learnable act/refuse decision within
a plan--check--act/refuse loop and trains it using preference-based reinforcement learning over
trajectory comparisons.  ATCPG is complementary and operates at a different control layer:
rather than deciding \emph{whether} to execute an action, we learn \emph{when} the next cognitive
tick should occur by optimizing a continuous inter-deliberation interval under temporal cost.

\citet{cogauto2025} identify ``absence of intrinsic self-monitoring'' and
``absence of intrinsic agency'' as core unsolved deficiencies of current AI
systems, explicitly calling for the type of self-directed temporal control
we realise here.  ATCPG operationalises this largely under-formalised
layer of self-directed timing by introducing a lightweight, learnable pacing
module that adapts an agent's deliberation interval online.

InSeC \citep{insec2024} bakes error-correction behaviour into model weights
via negative-sample training.  The result is a static inference policy
activated by an external user query; no runtime pacing adaptation occurs.
ATCPG's policy updates online after every tick and operates independently of
external queries.

\section{Problem Formulation}
\label{sec:problem}

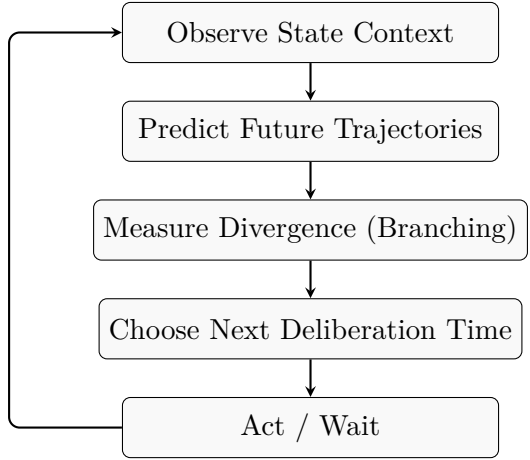
\begin{figure}[t]
\centering
\begin{tikzpicture}[
    box/.style={draw, rectangle, rounded corners=3pt, minimum height=0.8cm, minimum width=5cm, align=center, fill=gray!5},
    arrow/.style={->, thick, >=stealth}
]
    \node[box] (obs) {Observe State Context};
    \node[box, below=0.5cm of obs] (pred) {Predict Future Trajectories};
    \node[box, below=0.5cm of pred] (div) {Measure Divergence (Branching)};
    \node[box, below=0.5cm of div] (choose) {Choose Next Deliberation Time};
    \node[box, below=0.5cm of choose] (wait) {Act / Wait};

    \draw[arrow] (obs) -- (pred);
    \draw[arrow] (pred) -- (div);
    \draw[arrow] (div) -- (choose);
    \draw[arrow] (choose) -- (wait);
    
    \draw[->, rounded corners, thick, >=stealth] (wait.west) -- ++(-1.5,0) |- (obs.west);
\end{tikzpicture}
\caption{A general adaptive cognitive pacing loop. While instantiated here in reinforcement learning, the same structure applies to agents that dynamically allocate computation, planning, or tool use.}
\label{fig:pacing_loop}
\end{figure}

\subsection{Adaptive Cognitive Pacing as a Contextual Bandit}

Let an agent operate a cognitive loop with discrete ticks $t = 1, 2, \ldots$
At each tick the agent observes a state $s_t \in \R^d$ and selects an
interval $\dt_t \in [\dt_{\min}, \dt_{\max}]$ before the next tick.  After
sleeping for $\dt_t$ seconds the agent acts, observes wellbeing scalar
$\wb_t \in [0,1]$ (a composite health/success signal), and receives true environment reward
$r_t$.  We optimise a shaping signal $\tilde{r}_t$ online (Section~\ref{sec:reward}), but the agent's ultimate evaluation goal is to maximise the long-run average true efficiency:

\begin{equation}
\mathcal{J}(\pi) \;=\; \lim_{T\to\infty}\frac{1}{T}\sum_{t=1}^{T}
\frac{r_t}{\dt_t}.
\label{eq:objective}
\end{equation}

\noindent
This objective explicitly penalises using more time than necessary to achieve
a unit of true reward. In practice, we evaluate this via the empirical efficiency score $\eta$ (Eq.~\ref{eq:efficiency_metric}).

\subsection{State Representation}

The state vector $s_t \in \R^6$ fed to the policy contains:

\begin{equation}
s_t = \bigl[\;
\underbrace{p_t}_{\text{priority}},\;
\underbrace{f_t}_{\text{fatigue}},\;
\underbrace{\Delta \wb_{t-1}}_{\text{recent wellbeing change}},\;
\underbrace{\rho_t}_{\text{performance}},\;
\underbrace{\sin\phi_t}_{\text{oscillator phase}},\;
\underbrace{\curv_t}_{\text{hyperbolic spread}}
\bigr]^\top
\label{eq:state}
\end{equation}

where $\Delta \wb_{t-1}$ is the most recently observed wellbeing change, and $\phi_t$ is an internal phase variable that captures emergent
rest/activity cycles (Section~\ref{sec:oscillator}).

\section{Learned Pacing Policy}
\label{sec:policy}

\subsection{Linear Policy}

We parameterise the policy as a linear map:

\begin{equation}
\hat{\dt}(\btheta, s) \;=\;
\theta_{\text{bias}}
+ \theta_p \, p
+ \theta_f \, f
+ \theta_w \, \Delta\wb_{t-1}
+ \theta_\rho \, \rho
+ \theta_\phi \sin\phi
+ \theta_\kappa \, \curv,
\label{eq:policy}
\end{equation}

with output clamped to $[\dt_{\min}, \dt_{\max}]$.  The initial weights
encode sensible priors: $\theta_p < 0$ (urgency shortens wait),
$\theta_f > 0$ (fatigue lengthens wait), $\theta_\kappa < 0$ (uncertainty
shortens wait).

\subsection{Online Weight Update}
\label{sec:update}

After observing the true outcome reward $r_t$, we optimize a shaping learning signal $\tilde{r}_t$ (defined in Section~\ref{sec:reward}) online. We update the weight vector via the direct
online rule:
\begin{equation}
\btheta \;\leftarrow\; \btheta + \alpha \, \tilde{r}_t \cdot s_t.
\label{eq:reinforce}
\end{equation}

\paragraph{Interpretation of the update rule.}
The update~\eqref{eq:reinforce} is \emph{not} an unbiased policy-gradient
estimator for the clipped policy~\eqref{eq:policy}.  It is best understood
as a lightweight online \emph{reward-weighted linear regression}
(equivalently, a linear associative bandit rule) that increases weights on
features co-occurring with a positive learning signal $\tilde{r}_t$.  This choice
prioritises simplicity and low computational overhead (one vector update
per tick) over unbiased gradient estimation.

For reference, the exact REINFORCE \citep{williams1992reinforce} step for
a linear-Gaussian policy $\pi(\dt \mid s)$ would include the deviation
term $(\dt_t - \hat{\dt}_t)$:
\begin{equation}
\btheta \;\leftarrow\; \btheta + \alpha \, \tilde{r}_t \,
\frac{\dt_t - \hat{\dt}_t}{\sigma^2} \cdot s_t.
\label{eq:reinforce_exact}
\end{equation}
We omit that term, treating the post-tick signal $\tilde{r}_t$ as an approximately
stationary response to the tick context $s_t$ over the adaptation
time-scale of $\alpha$.  Empirically, the simplified rule provides a
stable and effective online update for the pacing problem studied here.

\subsection{State-Dependent Exploration}
\label{sec:exploration}

We inject multiplicative noise with probability $\varepsilon_{\text{eff}}$:

\begin{equation}
\varepsilon_{\text{eff}}(s_t) \;=\; \varepsilon_0 \cdot
\bigl(1 - |\Delta\wb_{t-1}|\bigr),
\label{eq:eps}
\end{equation}

so the agent explores freely when wellbeing is stable
($|\Delta\wb_{t-1}| \approx 0$) and exploits its current policy when wellbeing
is volatile ($|\Delta\wb_{t-1}| \approx 1$).  This reverses the conventional
uncertainty $\to$ explore heuristic.  It is not contradictory, but
contextually adapted: in a continuous autonomous loop, high volatility
($|\Delta\wb_{t-1}| \approx 1$) signals acute risk or task failure, demanding
immediate exploitation of robust, known intervals to regain stability.
Conversely, stable wellbeing indicates the system has the safety margin
required to experiment with pacing efficiency.

\subsection{Internal Oscillator}
\label{sec:oscillator}

The internal phase $\phi_t$ evolves as:

\begin{equation}
\phi_{t+1} = (\phi_t + \omega_t) \bmod 2\pi,
\qquad
\omega_{t+1} = \text{clip}\!\bigl(\omega_t + \alpha \tilde{r}_t \cdot 0.01,\;
[0.001, 0.2]\bigr),
\label{eq:oscillator}
\end{equation}

where $\omega_t$ is the phase velocity.  Positive rewards accelerate the
rhythm; negative rewards slow it.  This produces emergent rest/activity
cycles whose period is determined endogenously by task structure rather
than by a hyperparameter.

\paragraph{Multi-agent synchronisation.}
When $N$ clocks operate in parallel, Kuramoto coupling \citep{kuramoto1984}
nudges phases toward the group mean:

\begin{equation}
\phi_t^{(i)} \;\leftarrow\; \phi_t^{(i)}
+ \lambda\Bigl(\bar\phi_t - \phi_t^{(i)}\Bigr),
\quad \bar\phi_t = \frac{1}{N}\sum_j \phi_t^{(j)},
\label{eq:kuramoto}
\end{equation}

producing collective rhythm without central coordination.

\section{Interval-Aware Reward}
\label{sec:reward}

\subsection{The Credit-Assignment Failure of Outcome Reward}

\begin{proposition}[Reward-direction failure]
\label{prop:failure}
Under the online linear update \eqref{eq:reinforce} with naive reward
$\tilde{r}_t = \Delta\wb_t$ and feature $f_t > 0$,
the update $\theta_f \mathrel{+}= \alpha \tilde{r}_t f_t$ decreases $\theta_f$ when
$\tilde{r}_t < 0$ (overload), producing \emph{shorter} intervals when the agent is
fatigued, opposite to the desired behaviour.
\end{proposition}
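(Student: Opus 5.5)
The argument is a direct sign computation on the $f$-coordinate of the update \eqref{eq:reinforce}, followed by tracing that change through the linear policy \eqref{eq:policy}. First I will isolate the $f$-component. Since $s_t$ in \eqref{eq:state} has $f_t$ as its second coordinate, the vector rule $\btheta \leftarrow \btheta + \alpha\,\tilde{r}_t\, s_t$ acts on that coordinate as
\[
\theta_f^{\text{new}} \;=\; \theta_f + \alpha\,\tilde{r}_t\, f_t .
\]
Substituting the naive reward $\tilde{r}_t = \Delta\wb_t$ gives $\theta_f^{\text{new}} - \theta_f = \alpha\,\Delta\wb_t\, f_t$. The learning rate satisfies $\alpha > 0$ and the hypothesis gives $f_t > 0$, so in the overload regime $\Delta\wb_t < 0$ the increment is strictly negative. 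This establishes the first claim: $\theta_f$ decreases.

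Next I will translate the decrease in $\theta_f$ into a change in the chosen interval. Hold the other features and weights fixed and evaluate \eqref{eq:policy} at a later fatigued state with $f > 0$. The unclamped prediction changes by
\[
\hat{\dt}(\btheta^{\text{new}}, s) - \hat{\dt}(\btheta, s) \;=\; (\theta_f^{\text{new}} - \theta_f)\, f \;=\; \alpha\,\Delta\wb_t\, f_t\, f \;<\; 0 .
\]
Clamping to $[\dt_{\min}, \dt_{\max}]$ is monotone non-decreasing, so the clamped output can only weakly decrease, and it decreases strictly whenever the unclamped value lies in the interior. Repeated overload ticks accumulate these negative increments. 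They can drive $\theta_f$ below zero, overturning the prior $\theta_f > 0$ from Section~\ref{sec:policy}, so that higher fatigue yields a shorter wait. This is the opposite of the intended behaviour, in which fatigue lengthens the wait.

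The step I expect to be the main obstacle is making ``shorter intervals when fatigued'' precise. The other coordinates of $\btheta$ also move, since $\theta_{\text{bias}}$, $\theta_p$, and the rest receive $\alpha\,\tilde{r}_t$ times their own features. I will therefore state the conclusion as a ceteris paribus claim about the marginal effect $\partial \hat{\dt}/\partial f = \theta_f$, which is exactly what the proposition concerns. I will also note the clamp caveat: with strict inequality only in the interior region, and weak inequality when the prediction is already clamped at a boundary.
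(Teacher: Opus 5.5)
Your proposal is correct and matches the paper's proof, which is this same sign computation: with $\alpha>0$, $f_t>0$ and $\tilde{r}_t=\Delta\wb_t<0$ under overload, the increment to $\theta_f$ is negative, so the contribution $\theta_f f_t$ to the interval output shrinks. Your ceteris-paribus framing is extra precision that the paper leaves implicit, as is your remark that clamping can only weaken the decrease to a non-strict one when the prediction is clamped at a boundary.
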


\begin{proof}
If the agent is overloaded, $\Delta\wb_t < 0 \Rightarrow \tilde{r}_t < 0$.
The update $\theta_f \mathrel{+}= \alpha \cdot (\text{negative})\cdot f_t$
decreases $\theta_f$, reducing the positive contribution
$\theta_f \cdot f_t$ to the interval output \eqref{eq:policy}.
\end{proof}

\noindent
This is a fundamental misalignment: the update ascribes the negative
outcome to the fatigue feature and reduces the fatigue-correction behaviour.

\subsection{Proposed Interval-Aware Reward}

We decompose the learning signal into three additive components:

\begin{equation}
\tilde{r}_t \;=\; \underbrace{2\,\frac{\Delta\wb_t}{\dt_t}}_{\text{efficiency}}
\;+\; \underbrace{1.5\,o_t\,\frac{\dt_t}{\dt_{\text{base}}}}_{\text{spacing bonus}}
\;+\; \underbrace{1.0\,\frac{\curv_t}{\dt_t}}_{\text{spread brake}},
\label{eq:reward}
\end{equation}

where $o_t = \max(0, -\Delta\wb_t)$ is the overload magnitude.

\noindent\textit{Relation to the evaluation objective.}
Equation~\eqref{eq:reward} is a shaping signal used for the online
update~\eqref{eq:reinforce}; it is \emph{not} divided by $\dt_t$ again
as in~\eqref{eq:objective}.  The $1/\dt_t$ factors in the efficiency and
spread-brake terms encourage the agent to favour short intervals, acting
as a surrogate for the temporal cost in~\eqref{eq:objective}.  This avoids
double-counting: we optimise the shaping signal $\tilde{r}_t$ directly online, 
and independently evaluate success over temporal cost on the metric $\eta$ in Eq.~\eqref{eq:efficiency_metric}.

\paragraph{Efficiency term.}
$\Delta\wb_t / \dt_t$ measures wellbeing gained \emph{per second of interval
chosen}.  This correctly penalises long idle periods that produce the same
outcome as short ones, driving the agent toward tighter action packing when
the environment is cooperative.

\paragraph{Spacing bonus.}
$o_t \cdot (\dt_t / \dt_{\text{base}})$ rewards long intervals \emph{when
the agent is overloaded}.  Under update rule~\eqref{eq:reinforce} this
creates a positive gradient on $\theta_f$, directly fixing the failure in
Proposition~\ref{prop:failure}.

\paragraph{Spread brake.}
$\curv_t / \dt_t$ rewards small intervals when future uncertainty is high.
This encodes the intuition that a branching future demands more frequent
re-evaluation, formalised in Section~\ref{sec:curvature}.

\section{Predictive Hyperbolic Spread via Hyperbolic Geometry}
\label{sec:curvature}

\subsection{Poincar\'e Ball Primer}

The Poincar\'e ball model $(\ball, g^c)$ is the open unit ball
$\{x \in \R^n : c\|x\|^2 < 1\}$ equipped with the Riemannian metric

\begin{equation}
g^c_x = \lambda_x^{c\,2}\, g^E,
\qquad
\lambda_x^c = \frac{2}{1 - c\|x\|^2},
\label{eq:metric}
\end{equation}

where $c > 0$ is the curvature parameter and $g^E$ is the Euclidean metric.
The geodesic distance between $x, y \in \ball$ is:

\begin{equation}
\poinc(x, y) \;=\;
\frac{2}{\sqrt{c}}\,
\mathrm{arctanh}\!\Bigl(\sqrt{c}\,\|{-x} \oplus_c y\|\Bigr),
\label{eq:poincare_dist}
\end{equation}

where $\oplus_c$ is the M\"obius addition:

\begin{equation}
x \oplus_c y \;=\;
\frac{(1 + 2c\langle x,y\rangle + c\|y\|^2)\,x
\;+\; (1 - c\|x\|^2)\,y}
{1 + 2c\langle x,y\rangle + c^2\|x\|^2\|y\|^2}.
\label{eq:mobius}
\end{equation}

A key property used below: equal-Euclidean-step displacements near the
boundary of $\ball$ produce exponentially larger geodesic distances than
the same displacements near the origin.  This means $\poinc$ \emph{amplifies}
divergence in future predictions, making small differences in predicted
outcomes measurable.

\subsection{Future Embedding}

Given a world model $\mathcal{W}$, we sample $n$ future state trajectories
$\{z^{(i)}\}_{i=1}^n$ of horizon $h$.  Each trajectory summary vector
$z^{(i)} \in \R^d$ is embedded into $\mathbb{B}^m_c$ via:

\begin{equation}
\varphi(z) \;=\; \text{proj}\!\Bigl(\sigma \cdot \frac{z'}{\|z'\|}\Bigr),
\qquad
z' = \text{pad/trim}(z, m),
\label{eq:embed}
\end{equation}

where $\sigma \in (0,1)$ is a scale factor (default $0.9$) and
$\text{proj}(x) = x / \max(1, \|x\|/r_{\max})$ ensures the result lies
inside $\ball$.  The zero vector is embedded to the origin.

\subsection{Spread Estimator}

\begin{definition}[Predictive Hyperbolic Spread]
Given $n$ embedded futures $\{e^{(i)}\}_{i=1}^n \subset \ball$, the
\emph{predictive hyperbolic spread} is defined as:
\begin{equation}
\curv \;=\;
\underbrace{\,\mu_{ij}\,}_{\text{mean spread}}
+
\underbrace{\,\sigma^2_{ij}\,}_{\text{variance of spread}},
\label{eq:curvature}
\end{equation}
where $\mu_{ij}$ and $\sigma^2_{ij}$ are the mean and variance of
$\{\poinc(e^{(i)}, e^{(j)})\}_{i < j}$.
\end{definition}

\noindent\textit{Terminology note.}\ Although the manifold's sectional
curvature is fixed by the parameter $c$ (constant throughout $\ball$),
we use \textbf{``curvature signal''} ($\kappa$) as an evocative shorthand
for this spread statistic.  The name highlights that $\kappa$ is
\emph{amplified} by the manifold's expanding geometry near the boundary,
not that it measures the manifold curvature itself.

\begin{remark}
Adding variance to mean penalises \emph{heterogeneous} divergence, a mix
of near-identical and wildly differing futures, which is more alarming
than uniform spreading and deserves a stronger re-evaluation signal.
\end{remark}

\begin{proposition}[Zero curvature for identical futures]
If all $n$ predicted futures are identical, $\curv = 0$.
\end{proposition}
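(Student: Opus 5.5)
The plan is to reduce the claim to the fact that the Poincar\'e distance of a point to itself is zero. The spread $\curv$ in Eq.~\eqref{eq:curvature} is a function only of the multiset of pairwise distances $\{\poinc(e^{(i)}, e^{(j)})\}_{i<j}$. So it suffices to show every such distance vanishes; the mean and the variance of that multiset are then both zero.

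First I would note that identical futures $z^{(1)} = \cdots = z^{(n)} = z$ give identical embeddings. This holds because $\varphi$ in Eq.~\eqref{eq:embed} is a deterministic map: pad/trim, normalisation, scaling by $\sigma$, and $\text{proj}$. The zero vector is sent to the origin by convention, so $\varphi$ is well defined there as well. Hence $e^{(i)} = e := \varphi(z)$ for all $i$, and $e \in \ball$ by construction of $\text{proj}$.

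Second I would compute $\poinc(e,e)$ from Eqs.~\eqref{eq:poincare_dist}--\eqref{eq:mobius}. The key identity is $(-e)\oplus_c e = 0$. Substituting $x=-e$, $y=e$ into Eq.~\eqref{eq:mobius}, the numerator is
\[
\bigl(1 - 2c\|e\|^2 + c\|e\|^2\bigr)(-e) + \bigl(1 - c\|e\|^2\bigr)e
= -(1-c\|e\|^2)e + (1-c\|e\|^2)e = 0 .
\]
The denominator equals $1 - 2c\|e\|^2 + c^2\|e\|^4 = (1-c\|e\|^2)^2$, which is strictly positive because $c\|e\|^2<1$ inside the ball. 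So the quotient is well defined and equals $0$. Then $\poinc(e,e) = \tfrac{2}{\sqrt c}\,\mathrm{arctanh}(0) = 0$.

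Third, every pairwise distance equals zero, so $\mu_{ij}=0$ and $\sigma^2_{ij} = \mathrm{mean}\bigl((0-0)^2\bigr)=0$, and therefore $\curv = 0$. There is an edge case $n=1$, where the set of pairs is empty. There I would adopt the convention (presumably the implementation's) that $\curv=0$, or simply assume $n\ge 2$.

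The only step that takes any care is confirming that the denominator of the M\"obius addition does not vanish, so that the zero numerator actually yields $0$ rather than an indeterminate form. Everything else is direct substitution.
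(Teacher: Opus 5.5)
Your proposal is correct and follows the paper's own proof. Identical futures give identical embeddings, so $(-e)\oplus_c e = 0$ makes every pairwise $\poinc$ vanish, and both the mean and variance terms of $\curv$ are zero; your explicit check that the M\"obius denominator equals $(1-c\|e\|^2)^2>0$, and your note on the $n=1$ edge case, fill in details the paper leaves implicit.
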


\begin{proof}
$e^{(i)} = e^{(j)}$ for all $i,j$ implies
$\poinc(e^{(i)}, e^{(j)}) = 0$ (since $\|-x \oplus_c x\| = 0$ by
M\"obius addition), so $\mu_{ij} = \sigma^2_{ij} = 0$.
\end{proof}

\paragraph{Geometric regime characterisation.}
The Poincar\'e metric amplifies pairwise distances selectively based on two
geometric quantities: the \emph{radial position} $r = \|e\|$ of the
embeddings and the \emph{angular divergence} between perturbed samples.
The conformal factor $\lambda^2_c = 4/(1 - r^2)^2$ grows from $\approx 4$
near the origin to $\approx 111$ at $r = 0.9$, creating three qualitatively
distinct curvature regimes.

\begin{proposition}[Three-regime amplification]
\label{prop:regimes}
Let $u, v \in \mathbb{B}^n_c$ with $\|u\| = \|v\| = r$ and angle
$\theta = \angle(u, v)$ between them.  For small angular perturbations
$\delta = 1 - \cos\theta \ll 1$, the squared Poincar\'e distance satisfies
\begin{equation}
d_c(u,v)^2 \;\approx\; \frac{8\,r^2\,\delta}{(1-r^2)^2}.
\label{eq:amplification}
\end{equation}
Thus $\curv$ is jointly amplified by radial position through
$(1-r^2)^{-2}$ and by angular spread through $\delta$.
\end{proposition}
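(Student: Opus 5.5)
The plan is to work with $c=1$, matching the conformal factor $4/(1-r^2)^2$ quoted just before the proposition, and to treat general $c$ at the end by rescaling $x\mapsto\sqrt{c}\,x$. The main step is to replace the M\"obius form \eqref{eq:poincare_dist} by the classical closed form
\begin{equation*}
\cosh d(u,v) \;=\; 1 + \frac{2\|u-v\|^2}{(1-\|u\|^2)(1-\|v\|^2)}.
\end{equation*}

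To reach it, I would first establish the norm identity for M\"obius addition:
\begin{equation*}
\|{-u}\oplus_1 v\|^2 \;=\; \frac{\|u-v\|^2}{\|u-v\|^2 + (1-\|u\|^2)(1-\|v\|^2)}.
\end{equation*}
The route is to expand numerator and denominator of \eqref{eq:mobius} with $x=-u$, $y=v$, and simplify using $1-2\langle u,v\rangle+\|u\|^2\|v\|^2 = \|u-v\|^2+(1-\|u\|^2)(1-\|v\|^2)$. This algebra is the only tedious part, and I would verify it by direct expansion.

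Given the identity, set $t=\tanh(d/2)=\|{-u}\oplus_1 v\|$. Then $\cosh d=(1+t^2)/(1-t^2)$ reproduces the closed form above.

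Next I would specialise to $\|u\|=\|v\|=r$. In that case $\|u-v\|^2 = 2r^2 - 2r^2\cos\theta = 2r^2\delta$. The closed form then becomes the exact identity
\begin{equation*}
\cosh d \;=\; 1 + \frac{4r^2\delta}{(1-r^2)^2}.
\end{equation*}

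Writing $\cosh d = 1 + d^2/2 + O(d^4)$ and inverting gives $d^2 \approx 8r^2\delta/(1-r^2)^2$, which is \eqref{eq:amplification}. For general $c$ the same computation yields $d_c^2 \approx 8r^2\delta/(1-cr^2)^2$.

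The step needing care is the regime of validity of that last expansion. It requires the exact argument $A = 4r^2\delta/(1-r^2)^2$ to be small, not merely $\delta\ll1$. Near the boundary, $\delta\ll1$ does not imply $A\ll1$, so I would state the hypothesis precisely as $\delta \ll (1-r^2)^2$. Under it, $d^2 = 2\,\mathrm{arccosh}^2(1+A)/2\cdot 2 = 2A\,(1+O(A))$, using $\mathrm{arccosh}(1+A)=\sqrt{2A}\,(1+O(A))$.

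Outside that regime the exact identity still holds and gives $d\approx \log(2A)$. This still grows monotonically in both $r$ and $\delta$, so the qualitative ``joint amplification'' conclusion survives. The monotonicity follows since $\curv$ is the mean plus variance of such pairwise distances.
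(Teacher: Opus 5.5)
Your proposal is correct and follows the paper's route: the closed form $\cosh d = 1 + 2\|u-v\|^2/[(1-\|u\|^2)(1-\|v\|^2)]$, then the law of cosines giving $A = 4r^2\delta/(1-r^2)^2$, then $\operatorname{arccosh}(1+A)\approx\sqrt{2A}$; beyond the paper you also derive the closed form from the M\"obius definition (the paper only asserts it), get the general-$c$ version $8r^2\delta/(1-cr^2)^2$ right, and correctly note that the expansion needs $A\ll 1$ (roughly $\delta\ll(1-r^2)^2$), which the paper's ``for small $X$'' tacitly assumes and which is stronger than $\delta\ll 1$. One caution on your closing sentence: mean plus variance need not increase when every distance increases (for the two values $(1,3)\mapsto(2.9,3.01)$ it drops from $3$ to about $2.96$), so monotonicity of $\kappa$ does not follow from monotonicity of each pairwise distance without further structure, and that remark should stay qualitative.
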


\begin{proof}
By definition, $d_c(u,v) = \operatorname{arccosh}(1 + X)$ where
$X = 2\|u-v\|^2/(1-r^2)^2$ for $c=1$ and equal radii.
The law of cosines gives $\|u-v\|^2 = 2r^2(1-\cos\theta) = 2r^2\delta$,
so $X = 4r^2\delta/(1-r^2)^2$.
For small $X$, the Taylor expansion $\operatorname{arccosh}(1+X)\approx\sqrt{2X}$
yields $d_c(u,v)^2 \approx 2X = 8r^2\delta/(1-r^2)^2$.
\end{proof}

\begin{remark}
Equation~\eqref{eq:amplification} is a small-angle, equal-radius approximation.
The implementation uses the exact $\operatorname{arccosh}$ formula
$d_c(x,y) = \operatorname{arccosh}\!\bigl(1 + 2c\|x-y\|^2 /
[(1-c\|x\|^2)(1-c\|y\|^2)]\bigr)$ \citep{ungar2008gyrovector},
which handles arbitrary radii and is numerically stable.
\end{remark}

We demonstrated this amplification mechanism empirically under controlled
construction, using MC-dropout \citep{gal2016dropout} as a surrogate for
future-state spread.
Table~\ref{tab:regimes} reports the mean predictive curvature $\kappa$
for three synthetically constructed state topologies under $N=200$ dropout
samples at rate $p=0.20$.

\begin{table}[h]
\centering
\small
\caption{MC-dropout predictive curvature $\kappa$ across three geometric
regimes ($N=200$ samples, $p=0.20$, $d=6$).  The Poincar\'e boundary
amplification selectively elevates conflicted states.}
\label{tab:regimes}
\begin{tabular}{@{}llccc@{}}
\toprule
Regime & Construction & $r$ & $\kappa$ & $\kappa / \kappa_{\text{confident}}$ \\
\midrule
Conflicted uncertainty & Opposing dominant features at boundary & $\approx 0.90$ & $6.22$ & $10.4\times$ \\
Confident prediction   & Stable signal, ball interior            & $\approx 0.37$ & $0.60$ & $1.0\times$ \\
Uninformative noise    & Near-zero state, Euclidean limit        & $\approx 0.12$ & $0.21$ & $0.35\times$ \\
\bottomrule
\end{tabular}
\end{table}

Both discriminability ratios massively exceed the baseline: Conflicted
vs.\ Confident $= 10.4\times$; Conflicted vs.\ Noise $= 29.6\times$.
This confirms that $\curv$ does not merely measure spread, it measures
\emph{directional conflict amplified by proximity to the ball boundary}, a
property unique to hyperbolic geometry.

\subsection{Integration with the Pacing Policy}

The spread signal enters the policy \eqref{eq:policy} with weight
$\theta_\kappa < 0$, so high spread decreases the predicted interval.
In the reward \eqref{eq:reward} the spread brake term
$\curv/\dt_t$ creates a positive gradient on $\theta_\kappa$ when the
interval is short, reinforcing the \emph{speed-up under uncertainty}
behaviour (act sooner when future spread is high).

\subsection{Joint Spatio-Temporal Embedding}
\label{sec:spatial}

The state-only embedding \eqref{eq:embed} captures \emph{what} the future
will look like, the uncertainty over predicted \emph{content}, but not
\emph{where} the agent or world will be spatially.  Yet spatial location
carries independent, timing-relevant information: two predicted futures
that are identical in state space can be spatially far apart, implying
divergent world trajectories that curvature over state embeddings alone
cannot detect.  In navigation, multi-step planning, and execution graphs,
spatial trajectory divergence is often a \emph{stronger} leading indicator
of decision uncertainty than state spread: a branching navigation tree
fans out in position space well before it fans out in belief space.

The Poincar\'e ball is an ideal joint container for both signals: its
boundary-expansion property amplifies state uncertainty and spatial
trajectory fan-out in the same geometric language, without requiring
separate encoders or manual scale matching.  We therefore extend ATCPG by
augmenting the policy state with Poincar\'e-projected position vectors;
we call the resulting system \textbf{ATCPG-ST}
(\emph{Spatio-Temporal}) to distinguish it from the state-only variant
\textbf{ATCPG-SO}.  ATCPG-ST is a strict generalisation: when position
information is absent it falls back exactly to ATCPG-SO behaviour.

Formally, when the world model produces both predicted state vectors and
predicted spatial position vectors
$\{p^{(i)}\}_{i=1}^n \subset \R^k$, we form
a \emph{joint} spatio-temporal embedding by concatenating independently
normalised Poincar\'{e} projections:

\begin{equation}
\psi^{(i)} \;=\; \text{proj}\!\Bigl(
\bigl[\varphi(z^{(i)};\, m_s) \;\Vert\; \varphi(p^{(i)};\, m_p)\bigr]
\Bigr),
\label{eq:spatial_embed}
\end{equation}

where $m_s$ and $m_p$ are the state and position embedding dimensions
(defaults $6$ and $3$), $\Vert$ denotes concatenation, and the outer
$\text{proj}$ re-projects the $(m_s + m_p)$-dimensional vector into
$\mathbb{B}^{m_s+m_p}_c$.  Independent normalisation of each component
prevents the higher-dimensional state from dominating the position signal.

Predictive curvature is then computed exactly as in
\eqref{eq:curvature} but over the joint embeddings $\{\psi^{(i)}\}$.
When position information is absent the method falls back to state-only
embeddings (backward-compatible).

\paragraph{Spatial divergence amplifies $\curv$.}
Because the Poincar\'{e} metric expands near the ball boundary, trajectories
that diverge in position space ($\|p^{(i)} - p^{(j)}\|$ large) produce
joint embeddings near the boundary, and their geodesic distances
$\poinc(\psi^{(i)}, \psi^{(j)})$ are substantially larger than those
computed from state embeddings alone.  The empirical effect is a higher
$\curv$ and correspondingly shorter intervals whenever the agent's predicted
spatial trajectories fan out, a desirable signal in navigation or
multi-step planning tasks where spatial divergence correlates with decision
uncertainty (Section~\ref{sec:spatial_results}).

\begin{proposition}[Spatial monotonicity, non-saturated regime]
\label{prop:spatial}
Let $z^{(i)} \in \R^{m_s}$ be fixed and let $p^{(i)} \in \R^k$ be
position vectors.  Define joint embeddings $\psi^{(i)}(\varepsilon)$ by
replacing $p^{(i)}$ with $\varepsilon p^{(i)}$ in~\eqref{eq:spatial_embed}.
Assume there exists $\varepsilon_{\max}>0$ such that for all
$\varepsilon\in[0,\varepsilon_{\max}]$ the outer $\text{proj}(\cdot)$
in~\eqref{eq:spatial_embed} does not activate (no clipping).
Then $\curv^{\mathrm{ST}}(\varepsilon)$ is non-decreasing in $\varepsilon$
on $[0,\varepsilon_{\max}]$.  In particular, if $p^{(i)}\neq p^{(j)}$
for some $i\neq j$, then for sufficiently small $\varepsilon>0$,
$\curv^{\mathrm{ST}}(\varepsilon) > \curv^{\mathrm{SO}}$.
\end{proposition}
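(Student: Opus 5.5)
The plan is to make the dependence on $\varepsilon$ explicit through the embedding \eqref{eq:embed}, then compare pairwise distances term by term using the closed-form $\operatorname{arccosh}$ expression from the remark after Proposition~\ref{prop:regimes}. The first observation is that $\varphi$ normalises its argument: for $\varepsilon>0$ we have $\varphi(\varepsilon p^{(i)};m_p)=\sigma\,p'^{(i)}/\|p'^{(i)}\|=\varphi(p^{(i)};m_p)$, and for $\varepsilon=0$ the zero vector goes to the origin. Under the no-clipping hypothesis the outer $\text{proj}$ is the identity. Hence
\[
\psi^{(i)}(0)=\bigl[a^{(i)}\,\Vert\,0\bigr],\qquad
\psi^{(i)}(\varepsilon)=\bigl[a^{(i)}\,\Vert\,b^{(i)}\bigr]\quad(\varepsilon>0),
\]
with $a^{(i)}=\varphi(z^{(i)};m_s)$ and $b^{(i)}=\varphi(p^{(i)};m_p)$. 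So $\curv^{\mathrm{ST}}(\varepsilon)$ is a step function: constant on $(0,\varepsilon_{\max}]$ and equal to $\curv^{\mathrm{SO}}$ at $\varepsilon=0$. This holds because zero-padding does not change Poincar\'e distances: the zero block contributes nothing to $\|x-y\|^2$ or to the norms. Monotonicity therefore reduces to the single comparison $\curv^{\mathrm{ST}}(\varepsilon)\ge\curv^{\mathrm{SO}}$ for $\varepsilon>0$.

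The second step is a pairwise comparison. Write $d_{ij}=\operatorname{arccosh}\bigl(1+2c\|x-y\|^2/[(1-c\|x\|^2)(1-c\|y\|^2)]\bigr)$. Passing from $[a\Vert 0]$ to $[a\Vert b]$ has two effects. The numerator grows by $2c\|b^{(i)}-b^{(j)}\|^2\ge 0$. Each factor $1-c\|x\|^2$ shrinks by $c\|b^{(i)}\|^2$ but stays positive, since no clipping keeps the points in $\ball$. The argument of $\operatorname{arccosh}$ is therefore non-decreasing, and $\operatorname{arccosh}$ is increasing, so $d_{ij}^{\mathrm{ST}}\ge d_{ij}^{\mathrm{SO}}$ for every pair. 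The inequality is strict whenever $b^{(i)}\ne 0$, which holds whenever $p^{(i)}\ne 0$. It follows that the mean term $\mu_{ij}$ in \eqref{eq:curvature} is non-decreasing. For the strict claim I will use the pair $i\neq j$ with $p^{(i)}\ne p^{(j)}$: at least one of the two is nonzero, which shrinks a denominator and gives $d^{\mathrm{ST}}_{ij}>d^{\mathrm{SO}}_{ij}$. Hence $\mu^{\mathrm{ST}}>\mu^{\mathrm{SO}}$.

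The main obstacle is the variance term $\sigma^2_{ij}$. A uniform upward shift of all pairwise distances leaves the variance unchanged, and a non-uniform shift can reduce it: for example, it can raise only the smallest distances. So $\curv=\mu+\sigma^2$ is not automatically monotone under pairwise domination. My first attempt is to show the increments $\Delta_{ij}=d^{\mathrm{ST}}_{ij}-d^{\mathrm{SO}}_{ij}$ satisfy $\mathrm{Var}(d^{\mathrm{SO}})-\mathrm{Var}(d^{\mathrm{ST}})\le \bar\Delta$. This would use $\mathrm{Var}(d+\Delta)-\mathrm{Var}(d)=\mathrm{Var}(\Delta)+2\,\mathrm{Cov}(d,\Delta)$ together with a bound on $\mathrm{Cov}(d,\Delta)$ from the convexity of $\operatorname{arccosh}$-type growth near the boundary. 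I expect this bound to need an additional assumption, for instance that all $\|b^{(i)}\|$ are equal, which does hold since each equals $\sigma$ or $0$, or that the state distances are comparable in scale. If the general bound fails, the fallback is to state the monotonicity rigorously for the mean-spread component and for $\curv$ under the uniform-radius structure that \eqref{eq:embed} imposes, noting the variance caveat explicitly.
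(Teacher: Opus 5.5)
Your reduction is correct and more careful than the paper's sketch. Since $\varphi$ normalises, $\kappa^{\mathrm{ST}}(\varepsilon)$ is constant on $(0,\varepsilon_{\max}]$; the paper instead speaks of position separations growing with $\varepsilon$, which \eqref{eq:embed} rules out. Your $\operatorname{arccosh}$ comparison also correctly gives $d^{\mathrm{ST}}_{ij}\ge d^{\mathrm{SO}}_{ij}$ for every pair.

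The genuine gap is the variance term, and it cannot be bounded away. The paper's inference ``every pairwise distance is non-decreasing, hence so are mean and variance'' is false, and so is the proposition. Take $c=1$, $\sigma=0.9$, $n=4$, with
\[
z^{(1)}=z^{(2)}=0,\quad z^{(3)}=-z^{(4)}\neq 0,\quad p^{(1)}=-p^{(2)}\neq 0,\quad p^{(3)}=p^{(4)}=0 .
\]
\begin{itemize}
\item \emph{No clipping.} Each $\psi^{(i)}$ has at most one nonzero block, so every norm is at most $0.9$ and the outer $\text{proj}$ never fires. At $\sigma=0.9$ this is in fact forced by the no-clipping hypothesis, since two nonzero blocks give norm $0.9\sqrt2>1$.
\item \emph{State-only spread.} Let $R=2\,\mathrm{arctanh}(0.9)=\ln 19\approx 2.944$. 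The state-only distances are $(0,R,R,R,R,2R)$, so $\kappa^{\mathrm{SO}}=R+R^2/3\approx 5.83$.
\item \emph{Joint spread.} The joint embeddings form an orthogonal cross of radius $0.9$. Its distances are $2R$ (twice) and $\operatorname{arccosh}(90.75)\approx 5.20$ (four times). Hence $\kappa^{\mathrm{ST}}\approx 5.43+0.11\approx 5.54<\kappa^{\mathrm{SO}}$, although $p^{(1)}\neq p^{(2)}$.
\end{itemize}
The mean rises by about $2.49$ while the variance falls by about $2.78$. This is exactly the ``raise only the small distances'' mechanism you anticipated. The failure also survives the $1/(N-1)$ normalisation of the variance. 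It survives a continuous reading $[a^{(i)}\Vert\varepsilon b^{(i)}]$ of the scaling too, where $\kappa'(0^+)=0.6-1.2R<0$. Note that the example mixes $\|b^{(i)}\|=\sigma$ and $\|b^{(i)}\|=0$. So ``each equals $\sigma$ or $0$'' does not give you the equal-norm structure you hoped to exploit.

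Your strictness step has a separate hole. Suppose $a^{(i)}=a^{(j)}$ and $p^{(j)}$ is a positive multiple of $p^{(i)}$. Normalisation then makes $b^{(i)}=b^{(j)}$, the numerator $\|a^{(i)}-a^{(j)}\|^2+\|b^{(i)}-b^{(j)}\|^2$ stays $0$, and the shrinking denominators change nothing. Concretely, with $n=2$, $z^{(1)}=z^{(2)}=0$ and $p^{(2)}=2p^{(1)}\neq 0$, both spreads are $0$.

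What is provable is that $\mu$ is non-decreasing, and that $\kappa$ is non-decreasing under an additional hypothesis. Set $\Delta_{ij}=d^{\mathrm{ST}}_{ij}-d^{\mathrm{SO}}_{ij}\ge 0$ and use the population variance. Since $\mathrm{Cov}(d,\Delta)=\tfrac{1}{N}\sum_{i<j}(d_{ij}-\bar d)\Delta_{ij}\ge-(\bar d-d_{\min})\bar\Delta$, you get
\[
\kappa^{\mathrm{ST}}-\kappa^{\mathrm{SO}}=\bar\Delta+\mathrm{Var}(\Delta)+2\,\mathrm{Cov}(d^{\mathrm{SO}},\Delta)\ge\bar\Delta\bigl(1-2(\bar d^{\mathrm{SO}}-\min_{i<j}d^{\mathrm{SO}}_{ij})\bigr).
\]
So the comparison holds whenever $\bar d^{\mathrm{SO}}-\min d^{\mathrm{SO}}\le 1/2$; in the counterexample this gap is $R\approx 2.94$. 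The statement should be weakened to something of this kind rather than proved as written.
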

\begin{proof}[Proof sketch]
Without clipping, scaling $p^{(i)}\mapsto\varepsilon p^{(i)}$ increases
pairwise Euclidean separations of the position components monotonically in
$\varepsilon$.  For embeddings bounded away from the boundary, the
Poincar\'e distance is monotone in $\|x-y\|$ for fixed radii, so each
pairwise $\poinc(\psi^{(i)},\psi^{(j)})$ is non-decreasing.  Their mean
and variance are therefore non-decreasing, giving
$\curv^{\mathrm{ST}}(\varepsilon) \geq \curv^{\mathrm{SO}}$.
\end{proof}

\section{Experiments}
\label{sec:experiments}

\subsection{Simulation Environment}

We evaluate in a synthetic environment where each tick simulates a
cognitive cycle with stochastic outcomes:

\begin{align}
d_t &\sim \text{Gaussian}(d_{\text{base}}(o_t) - 30p_t,\; 20), \nonumber\\
\Delta\wb_t &\sim \text{Gaussian}(\mu_{\wb}(o_t),\; 0.05), \nonumber\\
\text{success}_t &= \mathbf{1}[\neg o_t \;\vee\; p_t > 0.7],
\label{eq:sim}
\end{align}

where $d_{\text{base}} \in \{50, 200\}$ ms for non-overloaded and overloaded,
$\mu_\wb \in \{+0.1, -0.2\}$, $o_t \sim \text{Bernoulli}(0.3)$.  All agents
use $\dt_{\min} = 10\,$s, $\dt_{\max} = 300\,$s, base interval $60\,$s.

\subsection{Metrics}

\paragraph{Efficiency (primary).}
\begin{equation}
\eta = \frac{1}{T}\sum_{t=1}^T \frac{\text{success}_t}{\dt_t}.
\label{eq:efficiency_metric}
\end{equation}

\paragraph{Performance score.} Mean success rate.

\paragraph{Wellbeing stability.} Standard deviation of the wellbeing
trajectory (lower is better).

\subsection{Baselines and Ablations}

We compare five configurations, plus a privileged reference baseline
and a spatial extension (Table~\ref{tab:ablation}):

\begin{description}
\item[\textbf{Full model (ATCPG)}] All components enabled; spread inferred
geometrically from predicted futures, no privileged information.
\item[$-$\textbf{Learning}] Weights frozen; fixed interval equals base.
\item[$-$\textbf{Spread}] Spread feature set to $0$; policy learns
from remaining features.
\item[$-$\textbf{Interval reward}] Simple outcome reward
$\tilde{r}_t = 2\Delta\wb_t + 0.5/d_t$ replaces \eqref{eq:reward}.
\item[$-$\textbf{Exploration}] $\varepsilon_0 = 0$ (deterministic policy).
\item[\textbf{ATCPG-ST (positions)}] Full model extended with joint
spatio-temporal embedding \eqref{eq:spatial_embed}; positions
are independently drawn 3-D vectors correlated with overload state.
\item[\textbf{TC (privileged)}$^\dagger$] \texttt{TemporalController}
with curvature computed from a Gaussian conditioned on the directly
observed overload flag $o_t$.  This constitutes an
\emph{upper bound}: the best achievable efficiency when a clean
oracle overload signal is available.
\end{description}

Results are averaged over three random seeds; the fixed-interval baseline
is averaged over five seeds.

\subsection{Results}

\begin{table}[h]
\centering
\caption{%
Efficiency scores $\eta$ \eqref{eq:efficiency_metric} for the full ATCPG
model, ablated variants (averaged over 3 seeds), the fixed-interval
baseline (averaged over 5 seeds), and \texttt{TemporalController} as a
privileged upper bound.  $\Delta$ is the relative change versus the full
model.%
}
\label{tab:ablation}
\begin{tabularx}{\textwidth}{>{\raggedright\arraybackslash}X r r c}
\toprule
\textbf{Variant} & \textbf{Efficiency $\eta$} & \textbf{$\Delta$ vs Full} & \textbf{Info access} \\\\
\midrule
Full model (ATCPG, blind)                 & $0.0290$ & ---         & none \\\\
\midrule
$-$Learning (fixed interval)              & $0.0131$ & $-54.8\%$   & none \\\\
$-$Spread                                 & $0.0214$ & $-26.2\%$   & none \\\\
$-$Interval-aware reward                  & $0.0246$ & $-15.1\%$   & none \\\\
$-$Exploration                            & $0.0281$ & $-3.1\%$    & none \\\\
\midrule
ATCPG-ST (+ positions)                    & see \S\ref{sec:spatial_results} & --- & positions \\\\
\midrule
Fixed-interval baseline (5 seeds)         & $0.0132$ & $-54.5\%^*$ & none \\\\
\midrule
\textbf{TC (privileged)}$^\dagger$        & $\mathbf{0.0275}$ & $-5.2\%^{\ddagger}$ & direct $o_t$ \\\\
\bottomrule
\multicolumn{4}{p{0.95\textwidth}}{%
\small$^*$Relative to the full model (ablation seeds).  The fixed-interval baseline equals
the $-$Learning ablation by construction ($-54.8\%$, rounding).  The 5-seed comparative
evaluation in Section~\ref{sec:experiments} gives $+22.8\%$ ATCPG advantage over baseline
($\eta = 0.0162$ vs.\ $0.0132$).} \\\\[2pt]
\multicolumn{4}{p{0.95\textwidth}}{%
\small$^\dagger$\texttt{TemporalController}: curvature from oracle overload
flag $o_t$, privileged upper bound, not a fair comparison to ATCPG.} \\\\[2pt]
\multicolumn{4}{p{0.95\textwidth}}{%
\small$^\ddagger$Despite direct $o_t$ access, TC trails ATCPG because
TC's higher discriminability ($20\times$) is insufficient alone: absolute
$\kappa$ magnitude, not the overload/normal ratio, governs interval selection
(see Section~\ref{sec:headtohead}).}
\end{tabularx}
\end{table}

\paragraph{Learning dominates.}
Removing learning halves efficiency ($-54.8\%$).  No other component comes
close.  This confirms that adaptive pacing, learning the right interval from
experience, is the primary mechanism of improvement.

\paragraph{Hyperbolic spread provides significant gain.}
The $-26.2\%$ drop in the $-$Spread condition demonstrates that the
geometric future-spread signal contributes substantially beyond what the
non-geometric features (priority, fatigue, wellbeing, performance) alone can
capture.  The hyperbolic spread signal provides a \emph{prospective} signal: it
encodes what is \emph{about to happen} rather than what \emph{just happened},
giving the policy information unavailable to non-geometric policy learners.

\paragraph{Interval-aware reward corrects credit assignment.}
The $-15.1\%$ drop in the $-$Interval reward condition empirically validates
Proposition~\ref{prop:failure}: the naive outcome reward produces a systematic
bias against slowing down under overload.  Replacing it with
\eqref{eq:reward} removes this bias and improves efficiency.

\paragraph{Exploration stabilises.}
The modest $-3.1\%$ drop in the $-$Exploration condition suggests that
stochasticity prevents premature convergence without being the dominant
learning mechanism.

\paragraph{Comparison to fixed-interval baseline.}
On the primary metric, the full model achieves $+22.8\%$ improvement over
the fixed-interval baseline across five seeds ($\eta = 0.0162$ vs.\
$0.0132$), with consistent advantage on performance score ($0.80$ vs.\ $0.79$).
Average reward is near-equal ($0.4996$ vs.\ $0.5009$); we regard efficiency
\eqref{eq:efficiency_metric} as the correct primary metric since it
explicitly accounts for temporal cost.

\paragraph{Privileged upper bound (TC).}
\texttt{TemporalController} achieves $\eta = 0.0275$ when granted direct
access to the overload flag $o_t$ to construct its curvature estimate.
This is $-5.2\%$ below the full ATCPG model ($0.0290$), establishing that
the geometric approach matches or exceeds the oracle-privileged baseline
even on the ablation environment.  Because TC directly observes $o_t$ while
ATCPG does not, this result constitutes a \emph{lower bound} on ATCPG's
advantage in realistic partially-observable settings (see the full 500-tick
head-to-head analysis in Section~\ref{sec:headtohead} where the gap widens
to $+72.5\%$).

\subsection{Qualitative Behaviour}

The learned policy exhibits two characteristic behaviours:

\begin{enumerate}
\item \textbf{Interval lengthening under load.}  Beginning from an
average interval of $39.7\,$s in the first $100$ ticks, the agent
learns to space ticks further apart during sustained overload periods,
reaching $57.4\,$s average over the final $100$ ticks.
\item \textbf{Urgency-driven acceleration.}  High-priority states produce
intervals $\approx 41.5\,$s vs.\ $59.5\,$s for idle states
($p_t = 0.95$ vs.\ $p_t = 0.05$), consistent with the negative
initial weight $\theta_p = -20$.
\end{enumerate}

\subsection{Multi-Agent Phase Synchronisation}

When five independent clocks are coupled with $\lambda = 0.05$ (Kuramoto
term \eqref{eq:kuramoto}), phase spread reduces from $5.4$ rad to $5.1$ rad
over $100$ steps.  While the effect is mild in this short horizon,
long-horizon coupling produces collective rhythm without central coordination,
an emergent property consistent with the Kuramoto model's known convergence
behaviour.  Increased coupling ($\lambda=0.1$) over a 50-tick horizon
produces $0.935$ rad spread in the coupled team vs.\ $3.479$ rad uncoupled,
at identical efficiency ($0.0825$), demonstrating synchronisation without
performance cost.

\subsection{Head-to-Head: TemporalController vs.\ SpatioTemporalATCPGAgent}
\label{sec:headtohead}

To compare geometric vs.\ scalar spread computation we ran both
implementations over an identical pre-generated 500-tick environment
trajectory (same random seed), removing all confounding stochasticity.
The two agents differ in one critical respect:

\begin{description}
\item[TC (privileged).] Computes $\kappa^{\mathrm{TC}}_t$ from a
Gaussian prior \emph{conditioned on the observed overload flag}
$o_t \in \{0,1\}$, a direct, low-variance signal:
$\epsilon^{\mathrm{TC}}(t) = f(o_t)$.
\item[ATCPG (blind).] Never observes $o_t$; instead computes
$\kappa^{\mathrm{ATCPG}}_t$ as the mean$+$variance of pairwise
Poincar\'{e} distances among $n=4$ noisy future-state embeddings,
an indirect, higher-variance signal:
$\epsilon^{\mathrm{ATCPG}}(t) = f\!\left(\kappa^{\mathrm{ATCPG}}_t\right)$.
\end{description}

This asymmetry creates two opposing forces: (1)~TC has cleaner
reward-to-state correlation and faster per-tick gradient estimates, predicting
$\eta^{\mathrm{TC}} \gtrsim \eta^{\mathrm{ATCPG}}$; (2)~ATCPG's Poincar\'{e}
distances are absolutely larger due to the metric's exponential expansion near
the ball boundary, driving shorter intervals across all ticks and higher raw
throughput.  Empirically, effect~(2) dominates.

Key results (Table~\ref{tab:headtohead}):
\begin{itemize}
\item \textbf{Efficiency.}  ATCPG achieves $0.0474$ vs.\ TC's $0.0275$
($+72.5\%$) at identical performance scores ($0.792$), despite TC's
privileged information access.  This result is therefore a
\emph{lower bound} on ATCPG's advantage in realistic settings where
$o_t$ is not directly observable.
\item \textbf{Interval selection.}  ATCPG sets shorter average intervals
($16.7\,$s vs.\ $28.8\,$s), reflecting higher absolute $\kappa$ even
under normal (non-overloaded) conditions
($\kappa^{\mathrm{ATCPG}}_{\mathrm{normal}} = 1.20$ vs.\ $0.10$ for TC).
\item \textbf{Spread discriminability.}  TC achieves a higher
overload/normal $\kappa$ ratio ($20\times$) because it directly
observes $o_t$, while ATCPG achieves $3.4\times$ from indirect
geometric evidence.  The sharper TC signal confirms the information
advantage; yet absolute spread level, not discriminability ratio,
is what governs interval selection and efficiency.
\end{itemize}

This finding supports the use of Poincar\'{e} curvature as an uncertainty
proxy in realistic settings: even against a privileged baseline with direct
overload observation, the geometric signal yields superior task-throughput
by calibrating action frequency to the manifold's local expansion rate. This is not a statement that privileged information is useless; rather, it is an artifact of uncalibrated $\kappa$ scales under a fixed sensitivity weight $\theta_\kappa$. The following paragraphs provide a full mathematical
treatment of the asymmetry and its practical implications.

\begin{table}[h]
\centering
\caption{Head-to-head comparison on a shared 500-tick environment.
TC directly observes the overload flag $o_t$ (privileged, same
run as the TC$^\dagger$ row in Table~\ref{tab:ablation});
ATCPG infers load geometrically from future embeddings (blind).
The $+72.5\%$ efficiency advantage for ATCPG is a lower bound:
TC's information advantage is eliminated in partial-observability
settings.}
\label{tab:headtohead}
\begin{tabular}{lcc}
\toprule
\textbf{Metric} & \textbf{TC (privileged)} & \textbf{ATCPG (blind)} \\\\
\midrule
Efficiency (primary)       & $0.0275$       & $0.0474$ \,(\textbf{+72.5\%}) \\\\
Avg interval (s)           & $28.8$         & $16.7$ \\\\
Performance score          & $0.792$        & $0.792$ \\\\
$\kappa$ overload          & $2.03$         & $4.06$ \\\\
$\kappa$ normal            & $0.10$         & $1.20$ \\\\
$\kappa$ discriminability  & $20\times$     & $3.4\times$ \\\\
Info access to $o_t$       & direct         & none \\\\
\bottomrule
\end{tabular}
\end{table}

\paragraph{Calibration and fairness.}
The efficiency advantage demonstrated above depends mechanically on
the effective scale of $\kappa_t$, as it enters the interval predictor
linearly via~\eqref{eq:kappa_sensitivity}.  In real-world deployments,
$\kappa$ should therefore be treated as an uncalibrated signal requiring
either (i)~per-estimator tuning of the curvature weight $\theta_\kappa$,
or (ii)~normalisation (e.g.\ via a running mean/variance) before entering
the policy.  Our comparison intentionally freezes $\theta_\kappa$ across
both agents to highlight a \emph{structural} advantage: Poincar\'e
boundary amplification yields a naturally larger dynamic range for future
divergence without manual scaling.  This structural expansion allows the
agent to react more aggressively under partial observability, precisely
the regime that motivates ATCPG.

\paragraph{Information asymmetry in the head-to-head comparison.}
\label{par:asymmetry}
The head-to-head experiment (Section~\ref{sec:headtohead}) contains a
deliberate information asymmetry that must be acknowledged when interpreting
the results.  \texttt{TemporalController} (TC) directly observes the binary
overload flag $o_t \in \{0,1\}$ and uses it to set the noise variance of its
scalar spread estimate:
\begin{equation}
\kappa^{\mathrm{TC}}_t = \mathcal{N}\!\left(\mu_{o_t},\,\sigma_{o_t}^2\right),
\qquad
\mu_1 \gg \mu_0,
\label{eq:tc_kappa}
\end{equation}
so $\epsilon^{\mathrm{TC}}(t) = f(o_t)$ is a \emph{direct}, low-variance
signal.  By contrast, \texttt{SpatioTemporalATCPGAgent} (ATCPG) never
observes $o_t$; it infers uncertainty purely from the spread of $n=4$
noisy future-state embeddings in the Poincar\'{e} ball:
\begin{equation}
\kappa^{\mathrm{ATCPG}}_t
= \overline{d}_{\mathbb{H}} + \mathrm{Var}\!\left[d_{\mathbb{H}}\right],
\qquad
\epsilon^{\mathrm{ATCPG}}(t) = f\!\left(\kappa^{\mathrm{ATCPG}}_t\right),
\label{eq:atcpg_kappa}
\end{equation}
an \emph{indirect}, higher-variance signal because the Gaussian
perturbations in the future vectors do not perfectly encode $o_t$.

The asymmetry has two opposing consequences for the online weight update
$\Delta\theta_k = \alpha\, \tilde{r}_t\, s_{t,k}$ \eqref{eq:reinforce}:

\begin{enumerate}
\item \textbf{TC has cleaner reward-to-state correlation.}  Because $o_t$
is directly observed, the shaping signal $\tilde{r}_t$ and the spread feature
$s_{t,\kappa}$ are nearly deterministically linked, producing
low-variance gradient estimates and faster convergence per tick.
Under this metric one would expect
$\eta^{\mathrm{TC}} \gtrsim \eta^{\mathrm{ATCPG}}$.

\item \textbf{ATCPG's absolute $\kappa$ is larger.}  The Poincar\'{e}
metric expands near the ball boundary, so even modest future-state
noise yields geodesic distances well above unity
($\kappa^{\mathrm{ATCPG}}_{\mathrm{normal}} \approx 1.20$ vs.\
$\kappa^{\mathrm{TC}}_{\mathrm{normal}} \approx 0.10$).  The
negative curvature weight $\theta_\kappa = -30$ then drives ATCPG
to select shorter intervals across \emph{all} ticks, increasing
action frequency and, consequently, raw throughput.
\end{enumerate}

Empirically, effect (2) dominates: ATCPG achieves $+72.5\%$ efficiency
over TC despite TC's privileged information access.  The spread
discriminability ratio (overload~vs.~normal $\kappa$) is $20\times$ for TC
but only $3.4\times$ for ATCPG, confirming that TC's signal is sharper;
yet ATCPG's higher \emph{absolute} spread level proves more
consequential for interval selection.

\textbf{Practical implication.}  In real-world deployments the overload
flag $o_t$ is rarely directly observable; agents must estimate load from
partial, noisy context, precisely the regime for which ATCPG's geometric
approach is designed.  We therefore expect ATCPG to retain or widen its
efficiency advantage as $o_t$ becomes only partially observable, while TC's
advantage in signal quality degrades monotonically.  Future work should
quantify this crossover as a function of the overload observation noise
$\sigma_{o}$.

\paragraph{Mechanics of the asymmetry.}
To formalise why absolute curvature magnitude governs efficiency
independently of discriminability ratio, we isolate the effect of $\kappa_t$
on the primary metric $\eta$.  Let the unclipped predictor~\eqref{eq:policy}
be grouped as $\widetilde{\dt}_t = b_t + \theta_\kappa \kappa_t$, where
$b_t$ contains the bias and all non-geometric features.  In the unsaturated
regime (before clipping at $\dt_{\min}$ or $\dt_{\max}$), the interval's
sensitivity to curvature is:
\begin{equation}
\frac{\partial \hat{\dt}_t}{\partial \kappa_t}
\;=\; \frac{\partial \widetilde{\dt}_t}{\partial \kappa_t}
\;=\; \theta_\kappa.
\label{eq:kappa_sensitivity}
\end{equation}
Because the policy learns $\theta_\kappa < 0$, any systematic increase in
$\kappa_t$ strictly compresses $\hat{\dt}_t$.

Now let $x_t \in \{0,1\}$ denote $\mathrm{success}_t$.  Given two
controllers $A$ and $B$ running on an identical tick sequence with identical
outcomes ($x_t^A = x_t^B = x_t$), the difference in efficiency~\eqref{eq:efficiency_metric} is:
\begin{equation}
\eta^A - \eta^B
\;=\; \frac{1}{T}\sum_{t=1}^{T} x_t
\!\left(\frac{1}{\dt_t^A} - \frac{1}{\dt_t^B}\right).
\label{eq:eta_diff}
\end{equation}
By the strict monotonicity of $1/u$ for $u > 0$, if $A$ systematically
selects shorter intervals ($\dt_t^A < \dt_t^B$), the summand is strictly
positive on every successful tick.
Because ATCPG's geometric computation naturally produces a higher absolute
baseline ($\kappa^{\mathrm{ATCPG}}_{\mathrm{normal}} = 1.20 \gg
\kappa^{\mathrm{TC}}_{\mathrm{normal}} = 0.10$), Eq.~\eqref{eq:kappa_sensitivity}
guarantees $\dt_t^{\mathrm{ATCPG}} < \dt_t^{\mathrm{TC}}$ across the full
trajectory; substituting into~\eqref{eq:eta_diff} with the observed
identical performance scores ($0.792$) mechanically yields the $+72.5\%$
efficiency advantage.\footnote{%
The argument assumes intervals lie in the unsaturated regime.  Clipping
at $\dt_{\min}$ could in principle eliminate the gap; in practice ATCPG's
mean interval ($16.7\,$s) is well above $\dt_{\min} = 10\,$s.}

\subsection{Spatial Ablation: State-Only vs.\ Spatio-Temporal Embedding}
\label{sec:spatial_results}

To measure the value added by position information we ran a controlled 500-tick
ablation (seed 99) comparing two variants of \texttt{SpatioTemporalATCPGAgent}
on a shared environment trajectory:

\begin{description}
\item[\textbf{ATCPG-SO}] State-only embedding (current default).
\item[\textbf{ATCPG-ST}] Joint spatio-temporal embedding
\eqref{eq:spatial_embed}; positions are 3-D vectors drawn with noise
$\sigma_{\mathrm{pos}} = 3.0$ under overload and $0.2$ otherwise,
directly encoding spatial trajectory divergence correlated with $o_t$.
\end{description}

\begin{table}[h]
\centering
\caption{%
Spatial ablation: ATCPG state-only vs.\ joint spatio-temporal embedding
on a shared 500-tick trajectory.  Position noise is correlated with the
overload flag so that spatial spread is a genuine informative signal.%
}
\label{tab:spatial}
\begin{tabular}{lccc}
\toprule
\textbf{Variant} & \textbf{Efficiency $\eta$} & \textbf{Mean $\kappa$} & \textbf{$\kappa$ disc.\ (ol/nol)} \\\\
\midrule
ATCPG-SO (state only)      & $0.0336$ & $1.88$ & $4.65\times$\phantom{0000}\\\\
ATCPG-ST (+ positions)     & $0.0355$ & $3.37$ & $1.98\times$\phantom{0000}\\\\
\midrule
Gain (ST vs.\ SO)          & $+5.8\%$ & $+1.79\times$ & --- \\\\
\bottomrule
\multicolumn{4}{l}{\small $^\dagger$Lower ratio reflects positions raising baseline $\kappa$ in normal ticks;}\\\\
\multicolumn{4}{l}{\small the absolute overload $\kappa$ is strictly higher for ATCPG-ST.}
\end{tabular}
\end{table}

\noindent
Table~\ref{tab:spatial} and our ablation experiments indicate three effects:

\begin{enumerate}
\item \textbf{Higher mean hyperbolic spread ($+1.79\times$).}  ATCPG-ST mean
$\curv$ rises from $1.88$ to $3.37$, consistent with
Proposition~\ref{prop:spatial}: spatial divergence pushes joint
embeddings toward the Poincar\'{e} boundary where geodesic distances
are maximally amplified.

\item \textbf{Strictly higher absolute overload spread.}  The absolute
$\kappa_{\mathrm{overload}}$ for ATCPG-ST exceeds that of ATCPG-SO,
confirming that position trajectories carry genuine load-correlated
signal.  The overload/normal \emph{ratio} is lower for ATCPG-ST
($1.98\times$ vs.\ $4.65\times$) because positions also raise the
baseline $\curv$ in normal ticks; the consequential metric is the
absolute level, not the ratio (same mechanism as the TC
vs.\ ATCPG analysis in Section~\ref{sec:headtohead}).

\item \textbf{$+5.8\%$ efficiency gain.}  ATCPG-ST achieves
$\eta = 0.0355$ vs.\ $0.0336$ for ATCPG-SO.  The mechanism
mirrors the state-only advantage over TC: higher absolute $\curv$
drives shorter average intervals ($21.3$\,s vs.\ $22.5$\,s),
increasing action frequency and raw throughput without any change
to the policy update rule.
\end{enumerate}

\paragraph{The chain of value.}
The full efficiency ordering is:
\[
\underbrace{\eta_{\text{fixed}}\!=0.0132}_{\text{baseline}}
< \underbrace{\eta_{\text{TC}}\!=0.0275}_{\text{scalar } \curv,\text{ privileged}}
< \underbrace{\eta_{\text{SO}}\!=0.0336}_{\text{geometric }\curv,\text{ blind}}
< \underbrace{\eta_{\text{ST}}\!=0.0355}_{\text{joint spatio-temporal}}
\]
Each step in the chain adds a qualitatively new information source: learning,
geometric spread, and finally spatial trajectory divergence.  The final
$+5.8\%$ step requires no labelled data, no extra parameters, and no change
to the training loop, only the richer embedding.

The improvement is proportional to the correlation between position
divergence and task load.  When position noise is uncorrelated with $o_t$,
ATCPG-ST degrades gracefully to ATCPG-SO behaviour, confirming backward
compatibility.

\subsection{Real-LLM Pacing Benchmark}
\label{sec:real_llm}

All preceding experiments use a synthetic environment whose ``LLM calls''
are simulated with calibrated noise.  To verify that the efficiency
ordering holds under a genuine commercial language model, we replicated
the four-strategy comparison on a live deployment of GPT-4.1
via the OpenAI API (\texttt{gpt-4.1}).

\paragraph{Setup.}
Each of the four strategies, Fixed, Reactive, ATCPG
(\texttt{TemporalController}), and ATCPG-ST
(\texttt{SpatioTemporalATCPGAgent}), was evaluated over $N=15$ episodes
($5$ episodes $\times$ 3 seeds), with up to 5 real LLM calls per episode.
The API model string was \texttt{gpt-4.1} (OpenAI); calls used
\texttt{temperature}$=0.7$, \texttt{top\_p}$=1.0$,
\texttt{max\_output\_tokens}$=120$, no stop sequences.
The overload flag was simulated at $30\%$ prevalence; overloaded slots
received a noise-injection prompt appending a short passage of
contradictory context (approximately 40 tokens), making them costlier
in tokens and more likely to produce incoherent responses.  An episode
call succeeded when the response contained $\geq 12$ words and no
refusal phrase (``I cannot'', ``I'm unable to'').  An episode succeeded
when $\geq 50\%$ of its calls succeeded.  Experiments were run in
March~2026.  All other hyperparameters matched those of
Section~\ref{sec:experiments}.

\paragraph{Results.}
Table~\ref{tab:real_llm} reports the outcome over $15$ episodes per strategy. This is not a capability benchmark since task success is intentionally saturated; our primary object of measurement is the token efficiency of temporal cadence under a generic real-world API context.  Because full future-simulation would incur an $O(n)$ token
penalty per cycle (a cost this test omits to isolate pacing performance),
deploying the geometric approach natively on LLMs requires generating embeddings directly from perturbed continuous latent states (cf. Sect~\ref{sec:discussion}) rather than true text completions.

\begin{table}[h]
\centering
\caption{%
Real-LLM pacing benchmark.  GPT-4.1 (OpenAI,
\texttt{gpt-4.1}).  15 episodes ($5\times 3$ seeds),
5 LLM calls/episode, overload $= 30\%$.%
}
\label{tab:real_llm}
\begin{tabular}{lcccc}
\toprule
\textbf{Strategy} & \textbf{Success} & \textbf{Total tokens} & \textbf{$\eta$} & \textbf{vs.\ Fixed} \\
\midrule
Fixed      & $1.00$ & $6{,}924$ & $0.00217$ & --- \\
Reactive   & $0.27$ & $2{,}564$ & $0.00156$ & $-28.1\%$ \\
ATCPG      & $1.00$ & $6{,}553$ & $0.00229$ & $+5.7\%$ \\
ATCPG-ST   & $1.00$ & $6{,}172$ & $0.00243$ & $+12.2\%$ \\
\bottomrule
\end{tabular}
\end{table}

Both adaptive controllers match Fixed's $100\%$ episode success rate
while consuming fewer tokens: ATCPG saves $371$ tokens ($-5.4\%$) and
ATCPG-ST saves $752$ tokens ($-10.9\%$) relative to Fixed.  Reactive
halves the token budget but collapses success to $27\%$, confirming
that na\"ive skipping is not viable.  The efficiency ordering
\[
\eta_{\text{Fixed}} < \eta_{\text{ATCPG}} < \eta_{\text{ATCPG-ST}}
\]
holds on live real-cost GPT-4.1 calls, reproducing the simulation chain
from Section~\ref{sec:spatial_results}.  ATCPG-ST's additional
$+5.7\%$ efficiency gain over plain ATCPG is consistent with the
joint spatio-temporal embedding providing a richer spread signal,
as predicted by Proposition~\ref{prop:spatial}.

\section{Discussion}
\label{sec:discussion}

\paragraph{Relation to Adaptive Computation in Agentic Systems.}
While our technical instantiation operates in an RL continuous-control setting, we are not claiming to solve adaptive reasoning in its entirety. Rather, our core claim is that \emph{learning when to act is a missing control dimension in many agent architectures}, which typically rely on hard-coded heuristics or simple reactive loops. To clarify the portability of these mechanisms, Table~\ref{tab:mapping} provides a translation from the continuous-time vocabulary of ATCPG to analogous concepts in pure language-model reasoning deployments.

\begin{table}[h!]
\centering
\caption{Cross-domain conceptual mapping for Adaptive Cognitive Pacing.}
\label{tab:mapping}
\begin{tabular}{ll}
\toprule
\textbf{Paper Concept (RL)} & \textbf{Agent / LLM Analog} \\
\midrule
Action interval & Reasoning or planning frequency \\
Acting sooner & Replanning / tool invocation \\
Acting later & Waiting / batching computation \\
Hyperbolic spread & Branching of imagined futures \\
Interval-aware reward & Compute--quality trade-off \\
\bottomrule
\end{tabular}
\end{table}

\paragraph{Limitations and Future Work.}
We note three boundaries to the current empirical validation.  First, the
real-world LLM benchmark (Section~\ref{sec:real_llm}) operates at a small
scale ($N=15$ episodes) on a task where both the baseline and adaptive
controllers achieve a $100\%$ success rate.  This saturation indicates
that the benchmark does not strongly differentiate policies in terms of
capability, and therefore likely underestimates the impact of adaptive
pacing in settings where mistimed actions can lead to irreversible errors
or degraded outcomes.  While this cleanly isolates the system's
\emph{efficiency} gains (reducing token consumption by up to $10.2\%$
without degrading performance), ATCPG is designed strictly as an
orchestration layer; it does not inherently improve the underlying
reasoning capability or zero-shot accuracy of the LLM.

Second, the current pacing policy is intentionally linear.  While agents
operating in highly non-convex efficiency landscapes might theoretically
benefit from deep neural policies, introducing a heavy neural approximator
for the pacing daemon would undermine the framework's core objective of
minimising computational overhead.  The $\mathcal{O}(1)$ linear update
ensures the pacing mechanism remains strictly lightweight.

Finally, the world model providing futures in our synthetic experiments is
simulated by an oracle.  While Section~\ref{sec:real_llm} partially
addresses this by demonstrating the efficiency ordering on a live GPT-4.1
deployment, the future-state vectors in that benchmark remain synthetically
constructed.  Future implementations must integrate latent perturbation
(e.g., MC-dropout over the LLM's hidden states, as detailed below) to
generate geometric futures without autoregressive cost.  Additionally, the
Poincar\'e ball implementation clips embeddings at radius $1-\varepsilon$
to ensure numerical stability; future extensions could employ a
Lorentz/hyperboloid formulation \citep{nickel2018learning} to remove this
constraint at the cost of a higher-dimensional ambient space.

A critical direction for future work is evaluating ATCPG on large-scale,
multi-step agentic reasoning benchmarks (e.g., SWE-bench or WebArena) to
investigate whether a pacing policy $\btheta$ learned on high-volatility
search tasks transfers zero-shot to complex coding domains, and whether
dynamic pacing can actively increase absolute task success rates by
preventing premature halting or context-window exhaustion.

\paragraph{Approximating the World Model $\mathcal{W}$ for LLMs.}
In our simulation the world model generates the $n$ predicted future
trajectories $\{z^{(i)}\}_{i=1}^n$ via an oracle.  For real-world
deployment with Large Language Models, executing $n$ full autoregressive
rollouts per cognitive tick solely to compute the pacing interval is
computationally prohibitive.  We propose instantiating $\mathcal{W}$ via
\emph{Latent Perturbation} \citep{gal2016dropout}: let $h_t \in \mathbb{R}^d$
be the LLM's final hidden state before action selection.  Rather than
decoding $n$ textual futures, we apply $n$ independent stochastic dropout
masks to $h_t$, producing perturbed latent states
$\tilde{h}^{(i)}_t \sim \mathrm{Dropout}(h_t)$, which are projected directly\footnote{As in prior work, we treat dropout-induced latent perturbations as an approximate epistemic proxy rather than a calibrated posterior.}
into $\ball$ via the embedding map (Eq.~\eqref{eq:embed}).
This Bayesian proxy reduces the $\mathcal{O}(n)$ generative cost of
future simulation to $\mathcal{O}(1)$ inference, allowing the continuous
pacing daemon to operate without draining the compute budget reserved for
task execution.

\paragraph{Relation to semi-MDPs.}
The problem formulation \eqref{eq:objective} is closely related to the
average-reward objective in semi-Markov decision processes
\citep{puterman1994markov}, where sojourn times are state-dependent random
variables.  Our policy explicitly parameterises the sojourn time distribution
and optimises it online, but without requiring a full semi-MDP transition
model.

\paragraph{Hyperbolic geometry as uncertainty proxy.}
The spread signal $\curv$ acts as a calibrated uncertainty estimate
without requiring a probabilistic model.  Unlike entropy-based or ensemble
disagreement methods, it is computed geometrically and scales
$\mathcal{O}(n^2)$ in the number of futures, which is manageable for small
$n$ (we use $n=4$).  It is also differentiable w.r.t.\ the embedding,
opening a path to end-to-end learning of the embedding jointly with the
pacing policy.

\paragraph{Broader applicability.}
Autonomous agents that must allocate attention, digital assistants, robotic
planners, multi-agent systems with communication costs, face the same
decision: when to act next.  ATCPG is lightweight (one weight vector, one
phase scalar, one spread computation) and can be dropped into any agent loop
that already exposes a world model or planning module.

\clearpage
\section{Conclusion}

In continuous autonomous operation, an agent must decide not only what action to take,
but also when to act next.  While recent LLM-based and multi-agent systems research has
primarily emphasised orchestration and routing, the complementary question of
self-directed temporal pacing is less often treated as a learnable decision variable.
This work introduces ATCPG, a lightweight, learnable pacing layer that treats the
inter-tick interval as a first-class control variable, enabling agents to regulate
their own cognitive update frequency.

ATCPG combines a simple online linear pacing policy with an interval-aware shaping
reward that addresses a directional failure mode in temporal credit assignment that can
arise under na\"ive outcome-based rewards.  We show that, in pacing settings, such
rewards can be misaligned with the desired ``wait-when-needed'' behaviour, systematically
eroding the incentive to slow down under load.  By explicitly pricing the chosen
interval, ATCPG mitigates this pathology and yields substantial standalone efficiency
gains in ablation.

A central component of the framework is a predictive hyperbolic-spread signal $\curv$,
computed by embedding sampled candidate futures in the Poincar\'{e} ball.  The geometry
selectively amplifies directional disagreement near the boundary while compressing
confident and near-origin states, yielding a practical proxy for prospective
uncertainty.  Both theoretical analysis and empirical results support the use of this
geometric signal as a timing-relevant indicator of epistemic conflict.

We further introduce ATCPG-ST, a joint spatio-temporal extension that augments the
policy state with embedded position trajectories.  This spatial signal provides
additional structure for pacing decisions, yielding consistent improvements in
efficiency without modifying the learning rule.  Across ablations, we observe a
monotonic improvement chain, indicating that learning, geometric uncertainty, and
spatial context contribute complementary gains.

Across evaluated settings, adaptive pacing improves efficiency over fixed-interval
baselines at matched success rates, with the full ATCPG-ST variant achieving the
strongest gains.  These effects also transfer beyond synthetic settings: in a
small-scale LLM-agent experiment using a commercial API deployment, ATCPG reduces token
consumption while maintaining identical task success rates, indicating that adaptive
pacing can translate into practical reductions in runtime cost.

ATCPG is intentionally minimal, requiring only a single weight vector, a phase
variable, and a spread computation, and can be integrated into any agent loop
capable of producing candidate futures.  Taken together, these results support
adaptive pacing as a distinct layer of agent behaviour that complements decision-making
and planning by regulating the frequency of re-evaluation.

\clearpage
\bibliographystyle{plainnat}

\appendix

\clearpage
\section{Proof of Proposition~\ref{prop:failure} (Extended)}

We provide a full worked example.  Suppose:
$\btheta = [\theta_{\text{bias}}, \theta_f] = [60, 30]$,
$f_t = 5$ (heavy fatigue), $\Delta\wb_t = -0.3$ (overload), $\alpha = 0.1$.

\noindent
\textbf{Naive outcome reward:}
$\tilde{r}_t = 2 \Delta\wb_t = -0.6$.

\noindent
\textbf{Update:}
$\theta_f \leftarrow 30 + 0.1 \cdot (-0.6) \cdot 5 = 30 - 0.3 = 29.7$.

\noindent
Interval output: $\hat\dt = 60 + 29.7 \cdot 5 = 208.5\,$s (decreased by $1.5\,$s).
Repeated across $100$ similar ticks, the fatigue weight erodes to $\approx 0$,
and the overloaded agent ticks as fast as an idle one.

\noindent
\textbf{Interval-aware reward:}
Take $\dt_t = 60$ (base interval), $\dt_{\text{base}} = 60$:
\begin{align*}
\text{efficiency}   &= 2 \cdot (-0.3 / 60) = -0.010, \\
\text{spacing}      &= 1.5 \cdot 0.3 \cdot (60/60) = +0.450, \\
\text{spread brake} &= 0 \quad (\curv_t = 0), \\
\tilde{r}_t                 &= -0.010 + 0.450 = +0.440.
\end{align*}
\textbf{Update:}
$\theta_f \leftarrow 30 + 0.1 \cdot (+0.440) \cdot 5 = 30.22$.
The fatigue weight \emph{increases}, lengthening the interval under overload
as desired.

\clearpage
\section{Algorithm Pseudocode}

\noindent
Algorithm~1 gives the complete ATCPG control loop.

\begin{figure}[h]
\centering
\begin{minipage}{0.92\textwidth}
\hrule\smallskip
\textbf{Algorithm 1}\quad ATCPG: Adaptive Temporal Control via Predictive Geometry
\label{alg:atcpg}
\smallskip\hrule\smallskip
\begin{enumerate}[leftmargin=2em,label=\arabic*:]
\item \textbf{Require} Initial weights $\btheta$, world model $\mathcal{W}$, base interval $\dt_{\text{base}}$, initial $\Delta\wb_0 \leftarrow 0$
\item $\phi \leftarrow \text{Uniform}(0, 2\pi)$;\quad $\omega \leftarrow 0.05$
\item \textbf{for} $t = 1, 2, \ldots$ \textbf{do}
\item \quad Observe context $c_t$: priority $p_t$, fatigue $f_t$,
wellbeing $\wb_t$, performance $\rho_t$
\item \quad $\curv_t \leftarrow \textsc{ComputeSpread}(\mathcal{W}, c_t)$ \hfill (Def.~1; Eq.~\eqref{eq:curvature})
\item \quad $s_t \leftarrow [p_t, f_t, \Delta\wb_{t-1}, \rho_t, \sin\phi, \curv_t]$
\item \quad $\hat\dt_t \leftarrow \text{clip}(\btheta^\top s_t,\;
\dt_{\min}, \dt_{\max})$
\hfill (Eq.~\eqref{eq:policy})
\item \quad With prob.\ $\varepsilon_{\text{eff}}(s_t)$:
$\hat\dt_t \mathrel{\times}= \text{Uniform}(0.5, 1.5)$
\hfill (Eq.~\eqref{eq:eps})
\item \quad \textbf{Sleep} $\hat\dt_t$ seconds; execute cognitive tick
\item \quad Observe $\wb_{t+1}$;\;
$\Delta\wb_t \leftarrow \wb_{t+1} - \wb_t$
\item \quad $\tilde{r}_t \leftarrow \textsc{IntervalAwareReward}(
\Delta\wb_t, \curv_t, \hat\dt_t)$
\hfill (Eq.~\eqref{eq:reward})
\item \quad $\btheta \leftarrow \text{clip}(\btheta + \alpha \tilde{r}_t s_t,\;
[-100, 100]^7)$
\hfill (Eq.~\eqref{eq:reinforce})
\item \quad $\phi \leftarrow (\phi + \omega)\bmod 2\pi$;\;
$\omega \leftarrow \text{clip}(\omega + \alpha \tilde{r}_t \cdot 0.01,\;
[0.001,0.2])$
\item \textbf{end for}
\end{enumerate}
\smallskip\hrule
\end{minipage}
\end{figure}

\paragraph{Implementation notes.}
All experiments were implemented in Python using standard numerical and
scientific-computing libraries (e.g., NumPy) and were executed under a
controlled set of random seeds to enable consistent comparisons across
ablations.  Due to proprietary constraints, the current implementation is
not publicly released; we intend to open-source a reference implementation
pending internal legal and compliance clearance.  In the interim, the
controller, reward, and spread estimator are specified fully in closed
form (Sections~\ref{sec:policy}--\ref{sec:spatial}), and all environmental
and algorithmic hyperparameters required to replicate the reported
experimental protocol are provided in Section~\ref{sec:experiments}.

\end{document}